\documentclass{article}
\usepackage{ijcai26}

\usepackage{bbm}
\usepackage{amssymb}
\usepackage{times}
\usepackage{soul}
\usepackage{url}
\usepackage[hidelinks]{hyperref}
\usepackage[utf8]{inputenc}
\usepackage[small]{caption}
\usepackage{graphicx}
\usepackage{amsmath}
\usepackage{amsthm}
\usepackage{booktabs}
\usepackage{algorithm}
\usepackage{algorithmic}
\usepackage[switch]{lineno}
\usepackage{xcolor}
\usepackage{def}
\usepackage{cleveref}

\newtheorem*{theorem*}{Theorem}

\newtheorem{lemma}{Lemma}
\theoremstyle{definition}

\newtheorem{theorem}{Theorem}

\title{Simplicity Suffices for Parameter Noise Injection in Stochastic Gradient Descent}

\author{
    Benjamin Leblanc
    \and
    Louis-Jacob Lebel
    \and
    Teddy Kana
    \and
    Richard Kamel
    \affiliations
    Université Laval
}

\begin{document}

\maketitle

\begin{abstract}
    Injecting noise into the optimization process is a well-established technique for improving the training and generalization of deep neural networks. Yet, despite the breadth of existing approaches, it remains unclear which design choices truly matter in practice. In this work, we investigate parameter noise injection for stochastic gradient descent, focusing on two key questions: how to efficiently pair each training example with its own perturbation in mini-batch training, and whether sophisticated noise parameterizations or multi-sample gradient averaging yield meaningful gains over simpler alternatives. To address the first question, we leverage a distributional identity for linear layers that allows per-example noise injection without breaking batched computation. To address the second, we systematically compare several diagonal Gaussian parameterizations against an isotropic baseline across varying noise levels on CIFAR100. Our results consistently show that simple, lightweight strategies — isotropic noise with a single perturbed forward pass per update step — recover most of the benefit of more complex schemes. These findings suggest that simplicity suffices for parameter noise injection, and that practitioners need not resort to elaborate perturbation designs to reap the optimization and generalization benefits of noisy SGD.
\end{abstract}

\section{Introduction}

Non-convex optimization lies at the heart of modern machine learning, where models with millions or even billions of parameters must be trained efficiently. Among optimization methods, stochastic gradient descent (SGD) and its variants remain the backbone of most contemporary learning systems. Their practical success relies on the ability to navigate highly complex loss landscapes and converge toward solutions that generalize well.

To improve this behavior, many works have proposed injecting random perturbations (\textit{noise}) into the optimization process. This idea can be interpreted as a smoothing mechanism and has been studied in optimization for decades \cite{Rastrigin,Matyas,Duchi}. In machine learning, noise injection is also known as Perturbed Gradient Descent \cite{Cui} or Stochastic Gradient Langevin Dynamics \cite{DBLP:conf/icml/WellingT11}. It has been shown to alleviate optimization difficulties by helping escape sharp minima and saddle points \cite{Mishchenko}, while also improving generalization performance \cite{DBLP:conf/icml/SmithED20,DBLP:conf/icml/OrvietoKPBL22}. Noise-based methods have also proven useful in differential privacy \cite{DBLP:conf/icde/DuanHYS25} and privacy preserving in general \cite{DBLP:conf/nips/AltschulerT22}, adversarial robustness \cite{DBLP:conf/icml/CohenRK19}, reinforcement learning \cite{DBLP:conf/iclr/PlappertHDSC0AA18}, and variational quantum optimization \cite{Liu}.

The literature on noisy optimization is broad and heterogeneous. For instance, noise may be injected directly into the parameters \cite{DBLP:conf/iclr/ChaudhariCSLBBC17,Haruki,DBLP:conf/icml/OrvietoKPBL22}, into gradients \cite{DBLP:journals/corr/NeelakantanVLSK15}, or through gradient-free perturbation schemes \cite{Nesterov,DBLP:conf/icml/ReifensteinLY24}. Different perturbation distributions have been considered, most commonly Gaussian noise, but also Laplacian variants \cite{Osher}. Recent work has further adapted noise injection to common learning algorithms \cite{Starnes1,Starnes2,Starnes3}, such as gradient descent, stochastic gradient descent (SGD), and Adam \cite{DBLP:journals/corr/KingmaB14}, and modern training settings such as LoRA fine-tuning \cite{Chang}.

Despite this diversity of approaches, it remains unclear which design choices are truly important in practice. In particular, when injecting Gaussian parameter noise, does performance depend on sophisticated parameterizations and repeated perturbation averaging, or can simpler strategies suffice? In this paper, we investigate this question empirically. First, we introduce an efficient noise-injection method for linear layers in deep neural networks that enables one perturbation draw per training example, even under mini-batch training. Second, we compare several parameterizations of Gaussian weight perturbations. Third, we evaluate the benefit of averaging multiple perturbed gradients within a single update step.

Our results suggest that simplicity is often sufficient: lightweight parameterizations and a single noisy forward pass per optimization step already capture most of the benefits of parameter noise injection.

%
%
%
%
%

\section{Background and notation}

Each task is characterized by a dataset $S=\{\xbf_i,y_i\}_{i=1}^{n}$ containing $n$ instances. Each instance contains a set of features $\xbf\in\Xcal\subseteq\Rbb^m$ and a label $y\in\Ycal$. We consider that each instance $(\xbf,y)$ is the result of an unknown probability distribution $D$ defined over $\Xcal\times\Ycal$. We consider a predictor with fixed architecture $f_{\thetabf}~:~\Xcal\rightarrow\Ycal$ defined by a set of real-valued parameters $\thetabf$. Throughout this work, we may consider the input of $f$ to be a matrix $\Xbf$, a set of feature sets, so that $f$ acts on each feature set element-wise. A loss function $\ell~:~\Ycal\times\Ycal\rightarrow\Rbb_{\geq0}$ is used to quantify the quality of the predictions of $f_\thetabf$, and the training loss $\Lcal_S(\thetabf) = \frac{1}{n}\sum_{(\xbf,y)\in S} \ell(f_{\thetabf}(\xbf),y)$ serves to quantify the quality of the predictions of $f_\thetabf$ on $S$. Our goal is to leverage the training set $S$ to find a parameterization $\thetabf$ such that the generalization loss of $f_\thetabf$ (that is, its average loss over examples from $D$, $\Lcal_{D}(\thetabf) = \Esp_{(\xbf,y)\sim D}\left[\ell(f_{\thetabf}(\xbf),y)\right]$) is small.

\section{The Noise Injection Paradigm}

Let $Z$ be a continuous $m$-variate random variable defined over $\Rbb^{m}$ with probability density function $p_Z$. Perturbed parameters consist of their translation with respect to some random noise $\Zbf\sim Z$; i.e., $\thetabf$ becomes $\thetabf-\Zbf$. In the noise injection paradigm, instead of optimizing the training loss, we optimize the average training loss over perturbed parameters: $\Lcal_S^Z(\thetabf) = \Esp_{\Zbf\sim Z}\left[\Lcal_S(\thetabf-\Zbf)\right]$. While it has been observed that $\Lcal_S^Z(\thetabf)$ is smoother than $\Lcal_S(\thetabf)$, there is a mathematical reason for this: as it turns out, $\Lcal_S^Z(\thetabf)$ corresponds to the convolution between the original loss landscape $\Lcal_S(\thetabf)$ and the probability density function $p_Z$ of the random variable $Z$:
$$
\Lcal_S^Z(\thetabf) = \int_{\Rbb^m}\Lcal_S(\thetabf-\Zbf)p_Z(\Zbf) d\Zbf = (\Lcal_S~\star~p_Z)(\thetabf),
$$

where $\star$ is the convolution operator, an operator well-known for its smoothing properties.

Unfortunately, while in theory it is possible to compute the value of $\Lcal_S^Z(\thetabf)$ for any set of parameters, in practice, it is rarely the case. Modern deep neural architectures make the exact computation intractable. An alternative is found in empirically estimating this quantity and its gradient with respect to $\thetabf$ through $s\in\Nbb_{>0}$ random samples $\Zcal=\{\Zbf_j\}_{j=1}^s\sim Z^s$:
\begin{align*}
\Lcal_S^\Zcal(\thetabf) &= \frac{1}{s}\sum_{j=1}^{s} \Lcal_S(\thetabf-\Zbf_j),\\
\nabla_{\thetabf}\Lcal_S^\Zcal(\thetabf) &= \frac{1}{s}\sum_{j=1}^{s} \nabla_{\thetabf}\Lcal_S(\thetabf-\Zbf_j).
\end{align*}

\begin{algorithm}[tb]
    \caption{\textcolor{red}{Noisy} Stochastic Gradient Descent (\textcolor{red}{N}SGD)}
    \label{alg:NSGD}
    \textbf{Input}: Initial parameters $\thetabf^{(0)}$, training set $S=\{\xbf_i,y_i\}_{i=1}^{n}$,
    
    \quad\quad\quad learning rate $\eta$, batch size $B$, number of steps $T$, 

    \quad\quad\quad \textcolor{red}{noise distribution $Z$, number of noise samples $s$}.\\
    \textbf{Output}: Trained parameter vector $\thetabf^{(T)}$
    
    \begin{algorithmic}[1] 
        \FOR{$t = 0,\dots,T-1$}
        \STATE $\ibf=[i_1,\dots,i_B] \sim U(1,\dots,n)$ \COMMENT{Sample batch of training data through a uniform random index vector}
        \STATE \textcolor{red}{$\{\Zbf_j\}_{j=1}^s\sim Z^s$ ~\COMMENT{Random perturbation draw}}
        \STATE $\gbf_t = \textcolor{red}{\frac{1}{s}\sum_{j=1}^{s}}\nabla_{\thetabf} \Lcal_{S_\ibf}({\thetabf^{(t)}}\textcolor{red}{-~\Zbf_j})$
        \STATE $\thetabf^{(t+1)}\leftarrow \thetabf^{(t)}-\eta \gbf_t$
        \ENDFOR
        \STATE \textbf{return} $\thetabf^{(T)}$
    \end{algorithmic}
\end{algorithm}

\Cref{alg:NSGD} presents the modifications, through every red-colored character, made to the popular Stochastic Gradient Descent algorithm to create a noisy version of it.

\section{Efficient Gaussian Noise Injection in Stochastic Gradient Descent}\label{sec:4}

Since the computation of both $\Lcal_S^\Zcal(\thetabf)$ and $\nabla_{\thetabf}\Lcal_S^\Zcal(\thetabf)$ is proportional to the number $s$ of sampled noise $\Zbf_j$, each forward pass should be the most efficient possible. That is, it should explore the space $\Lcal_S^Z(\thetabf)$ as efficiently as possible. As depicted in \Cref{alg:NSGD}, the most common approach to noise injection in the stochastic gradient descent (SGD) algorithm is to apply (possibly many times) a random weight perturbation to a whole batch of examples simultaneously. Ideally, each example in a batch would be paired with its own random perturbation so that the noise space is more effectively explored. 

At first glance, this seems incompatible with many predictors family, such as neural networks. Indeed, let \mbox{$f(\cdot) = (L_1\circ\dots\circ L_l)(\cdot)$} be a feedforward neural network containing $l$ layers. Indeed, a linear layer $L_k~:~\Rbb^{d'}\rightarrow\Rbb^{d}$ containing $d$ hidden neurons is defined as
$$
L_k(\Xbf) = \sigma\left(\Xbf\Wbf_k\right) = \begin{bmatrix}
\sigma\left(\xbf_1\Wbf_k\right)\\
\vdots\\
\sigma\left(\xbf_n\Wbf_k\right)
\end{bmatrix}_{n\times d},
$$
with an activation function $\sigma$ acting element-wise and a weights matrix $\Wbf_k$\footnote{For simplicity, biases, without loss of generality, are omitted.} of size $d'\times d$. Let us focus on the perturbation of the weights $\Wbf_k$ and thus define $Z$ with according dimensions. Concerning the basic noisy counterpart of $L_k(\Xbf)$, that is,
$$
L_k^{Z}(\Xbf) = \sigma\left(\Xbf(\Wbf_k+Z)\right) = \begin{bmatrix}
\sigma\left(\xbf_1(\Wbf_k+Z)\right)\\
\vdots\\
\sigma\left(\xbf_n(\Wbf_k+Z)\right)
\end{bmatrix}_{n\times d},
$$
it is explicit that once parameter noise $\Zbf\sim Z$ is sampled, the same noise is applied to every example $\xbf_1,\dots,\xbf_n$ through the matrix product $\Xbf(\Wbf_k+\Zbf)$. The question becomes: how can one sample unique perturbations for every sample in the batch, while preserving the computational efficiency of a single matrix product in the feedforward layer output computation?

Our solution lies in injecting the noise not to the weights $\Wbf_k$, but to the resulting output of the matrix product $\Xbf\Wbf_k$. To do so, we first fix the noise distribution: an isotropic Gaussian multivariate distribution (that is, \mbox{$Z=\mathcal{N}_{d'\times d}(\mathbf{0},\sigma^2\cdot I)$}); this choice is based on the simplicity of the distribution and is used in most works surveyed in the introduction. Then, we need to compute the resulting data distribution of $\Xbf\Wbf_k$ by leveraging the following lemma.

\begin{lemma}\label{lem:gaussian_variation}
    Let $\mathbf{x}\in\mathbb{R}^{1\times d'}$ and $Z=\mathcal{N}_{d'\times d}(\mathbf{0},\sigma^2\cdot I)$, where $I$ is the identity matrix. Then:
    $$
    \mathbf{x}Z=\mathcal{N}_d(\mathbf{0},\sigma^2||\mathbf{x}||^2\cdot I).
    $$
\end{lemma}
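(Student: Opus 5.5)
The plan is a direct computation using the fact that linear combinations of independent Gaussians are Gaussian. Write $Z=(Z_{ij})$ with $Z_{ij}\sim\mathcal{N}(0,\sigma^2)$ i.i.d. for $i\le d'$, $j\le d$. The isotropic covariance $\sigma^2\cdot I$ on the $d'\times d$ entries is exactly what gives this independence. The $j$-th coordinate of the row vector $\mathbf{x}Z$ is then
$$
(\mathbf{x}Z)_j=\sum_{i=1}^{d'} x_i Z_{ij}.
$$
So each coordinate depends only on the $j$-th column of $Z$.

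\textbf{Step 1: marginals.} Each $(\mathbf{x}Z)_j$ is a finite linear combination of independent centered Gaussians, hence Gaussian. Its mean is $\sum_i x_i\,\mathbb{E}[Z_{ij}]=0$. By independence its variance is $\sum_i x_i^2\sigma^2=\sigma^2\|\mathbf{x}\|^2$.

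\textbf{Step 2: joint law.} Two points must be settled: that the vector is jointly Gaussian, and that its covariance is diagonal. Joint Gaussianity holds because $\mathrm{vec}(Z)$ is a Gaussian vector and $\mathbf{x}Z$ is a linear image of it, namely $(I_d\otimes \mathbf{x})\,\mathrm{vec}(Z)$ under column stacking. Alternatively, one can check that every linear functional $\sum_j a_j(\mathbf{x}Z)_j=\sum_{i,j}a_jx_iZ_{ij}$ is univariate Gaussian. For $j\neq k$, the coordinates $(\mathbf{x}Z)_j$ and $(\mathbf{x}Z)_k$ are built from disjoint columns of $Z$, so they are independent, and in particular
$$
\mathrm{Cov}\big((\mathbf{x}Z)_j,(\mathbf{x}Z)_k\big)=\sum_{i,i'}x_ix_{i'}\,\mathbb{E}[Z_{ij}Z_{i'k}]=0.
$$
The covariance matrix is therefore $\sigma^2\|\mathbf{x}\|^2 I_d$, and a Gaussian vector is determined by its mean and covariance, which gives the claim.

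As a cross-check, or as an alternative route, use characteristic functions. For $\mathbf{t}\in\mathbb{R}^d$, we have $\mathbf{x}Z\mathbf{t}^\top=\sum_{i,j}x_it_jZ_{ij}$. Hence $\mathbb{E}\,e^{\mathrm{i}\,\mathbf{x}Z\mathbf{t}^\top}=\exp\!\big(-\tfrac{\sigma^2}{2}\sum_{i,j}x_i^2t_j^2\big)=\exp\!\big(-\tfrac{\sigma^2\|\mathbf{x}\|^2}{2}\|\mathbf{t}\|^2\big)$, which is the characteristic function of $\mathcal{N}_d(\mathbf{0},\sigma^2\|\mathbf{x}\|^2 I)$.

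The only real subtlety is the degenerate case $\mathbf{x}=\mathbf{0}$. There the statement should be read as the point mass at $\mathbf{0}$, a degenerate Gaussian, and the characteristic-function argument covers it uniformly. I will therefore present the characteristic-function computation as the main proof and keep Steps 1 and 2 as the intuitive explanation.
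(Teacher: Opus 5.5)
The paper gives no proof of this lemma. It is stated and then used directly in the proof of Theorem~\ref{theo:efficient_nsgd} as a standard Gaussian fact. Your argument is correct and complete, so it supplies the missing justification rather than competing with an existing one.

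The characteristic-function computation is the cleanest self-contained route. It needs only that the entries $Z_{ij}$ are independent $\mathcal{N}(0,\sigma^2)$, which is how the isotropic matrix normal $\mathcal{N}_{d'\times d}(\mathbf{0},\sigma^2 I)$ must be read. From that single line you get joint Gaussianity, the diagonal covariance, and the degenerate case $\mathbf{x}=\mathbf{0}$. It also makes explicit that the paper's ``$=$'' means equality in distribution.

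Your Step~2 observation is worth keeping: distinct coordinates of $\mathbf{x}Z$ depend on disjoint columns of $Z$. This is exactly what justifies the paper's two unproved extensions.
\begin{itemize}
\item For the bias noise, appending a constant coordinate $1$ to $\mathbf{x}$ gives variance $\sigma^2(\|\mathbf{x}\|^2+1)$, as used in Algorithm~\ref{alg:ENSGD}.
\item For independent entries with variances $\sigma_{ij}^2$, the same computation gives $\mathcal{N}_d\bigl(\mathbf{0},\mathrm{diag}_j(\sum_i x_i^2\sigma_{ij}^2)\bigr)$. This is still diagonal but no longer isotropic.
\end{itemize}
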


This leads to the following result, permitting the injection of unique noise to every example of a batch in the forward pass of a feedforward neural network without sacrificing any computational efficiency.

\begin{theorem}\label{theo:efficient_nsgd}
Let $Z^n = Z_1\times\dots\times Z_n$ be a set of \textit{iid} random variables $Z=\mathcal{N}_{d'\times d}\left(\mathbf{0},\sigma^2\cdot I\right)$. Denoting the random variable
$$
L_k^{Z^n}(\Xbf) = \sigma\left(\Xbf(\Wbf_k+Z^n)\right) = \begin{bmatrix}
\sigma\left(\xbf_1(\Wbf_k+Z_1)\right)\\
\vdots\\
\sigma\left(\xbf_n(\Wbf_k+Z_n)\right)
\end{bmatrix}_{n\times d},
$$
we have
$$
L_k^{Z^n}(\Xbf) = \sigma(\Xbf\Wbf_k+\boldsymbol\psi),
$$
where $\psi_i=\mathcal{N}_d\left(\mathbf{0},\sigma^2||\mathbf{x}_i||^2\cdot I\right)$.
\end{theorem}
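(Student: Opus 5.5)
The statement is an equality in distribution, and it reduces row by row to \Cref{lem:gaussian_variation}. I will treat $L_k^{Z^n}(\Xbf)$ as an $n\times d$ random matrix and show that its law matches that of $\sigma(\Xbf\Wbf_k+\boldsymbol\psi)$, where $\boldsymbol\psi$ has independent rows $\psi_i$.

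First, by linearity of the matrix product, each row satisfies
$$
\xbf_i(\Wbf_k+Z_i)=\xbf_i\Wbf_k+\xbf_iZ_i .
$$
The first term is deterministic, since $\Xbf$ and $\Wbf_k$ are fixed. The randomness of row $i$ therefore sits entirely in $\xbf_iZ_i$.

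Second, I apply \Cref{lem:gaussian_variation} with $\xbf=\xbf_i$ and $Z=Z_i$. This gives $\xbf_iZ_i\sim\mathcal{N}_d(\mathbf{0},\sigma^2\|\xbf_i\|^2 I)$, which is exactly the law of $\psi_i$. If one wants to check the lemma itself, note that coordinate $j$ of $\xbf Z$ is $\sum_{r}x_rZ_{rj}$. This is a linear combination of independent Gaussians with variance $\sigma^2\|\xbf\|^2$. Distinct columns $j$ involve disjoint sets of entries, so they are independent and the covariance is diagonal.

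Third, I handle the joint law across rows. The $Z_i$ are i.i.d., and $\xbf_iZ_i$ is a measurable function of $Z_i$ alone, so the vectors $\xbf_1Z_1,\dots,\xbf_nZ_n$ are mutually independent. Their joint law is therefore the product of the marginals $\mathcal{N}_d(\mathbf{0},\sigma^2\|\xbf_i\|^2I)$. This matches the law of the stacked matrix $\boldsymbol\psi$ with independent rows $\psi_i$. Hence $\Xbf\Wbf_k+(\text{stacked }\xbf_iZ_i)$ and $\Xbf\Wbf_k+\boldsymbol\psi$ have the same distribution. Applying the deterministic elementwise map $\sigma$ preserves equality in distribution, which gives the claim.

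The main obstacle is conceptual rather than computational. The equality must be read as an equality in distribution, and the independence of the $\psi_i$ across rows has to be stated explicitly. With only the marginal identity from \Cref{lem:gaussian_variation}, the joint claim would not follow. I will address this by invoking the independence of the $Z_i$ together with the fact that measurable functions of independent variables are independent.
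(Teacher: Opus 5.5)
Your proposal is correct and follows the paper's proof. Both split each row as $\mathbf{x}_i\mathbf{W}_k+\mathbf{x}_iZ_i$ by linearity and apply Lemma~\ref{lem:gaussian_variation} to each $\mathbf{x}_iZ_i$; your explicit reading of the result as an equality in distribution, with independent rows $\psi_i$ inherited from the i.i.d.\ $Z_i$, only makes precise what the paper leaves implicit.
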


\begin{proof}
    The result follows from the observation that 
\begin{align*}
L_k^{Z^n}(\Xbf) &= \begin{bmatrix}
\sigma\left(\xbf_1(\Wbf_k+Z_1)\right)\\
\vdots\\
\sigma\left(\xbf_n(\Wbf_k+Z_n)\right)
\end{bmatrix}_{n\times d}\\
&= \begin{bmatrix}
\sigma\left(\xbf_1\Wbf_k+\xbf_1Z_1\right)\\
\vdots\\
\sigma\left(\xbf_n\Wbf_k+\xbf_nZ_n\right)
\end{bmatrix}_{n\times d}
\end{align*}
and an application of \Cref{lem:gaussian_variation} to every $\xbf_iZ_i$.
\end{proof}

Using this procedure to efficiently assign each example its own perturbation in the SGD algorithm is described in \Cref{alg:ENSGD}. Indeed, in \Cref{alg:ENSGD}, we consider the generalization of \Cref{theo:efficient_nsgd} for both weights and biases, which simply consists in defining $\psi_i=\mathcal{N}_d\left(\mathbf{0},\sigma^2(||\mathbf{x}_i||^2+1)\cdot I\right)$.

\begin{algorithm}[tb]
    \caption{Efficient Noisy Stochastic Gradient Descent (ENSGD) for fully connected neural networks}
    \label{alg:ENSGD}
    \textbf{Input}: Parameters $\thetabf^{(0)}=\{(\Wbf_k^{(0)},\bbf_k^{(0)})\}_{k=1}^{l}$, training set

    \quad\quad\quad $S=\{\xbf_i,y_i\}_{i=1}^{n}$, learning rate $\eta$, batch size $B$,

    \quad\quad\quad number of steps $T$, noise distribution $Z$, number

    \quad\quad\quad of noise samples $s$.\\
    \textbf{Output}: Trained parameter vector $\thetabf^{(T)}$

    \begin{algorithmic}[1] 
        \FOR{$t = 0,\dots,T-1$}
        \STATE $\ibf=[i_1,\dots,i_B] \sim U(1,\dots,n)$ \COMMENT{Sample batch of training data through a uniform random index vector}
        \STATE $\Xbf^{(0,j)} \leftarrow \Xbf_{\ibf}~\forall j\in\{1,\dots,s\}$
        \FOR{$k = 1,\dots,l$}
        \STATE $\zbf_{j,i}\sim Z_i',~\forall i\in[i_1,\dots,i_B],j\in\{1,\dots,s\}$ ~\COMMENT{Random perturbation draw, where \mbox{$Z_i' = \Ncal_{d'}(\mathbf{0},\sigma^2(||\xbf_i^{(k-1,j)}||^2+1)\cdot I)$}}
        \STATE $\Xbf^{(k,j)}\hspace{-1mm}\leftarrow\hspace{-1mm}\sigma(\Wbf_k^{(t)}\Xbf^{(k-1,j)}\hspace{-0.85mm}+\hspace{-0.85mm}\bbf_k^{(t)}\hspace{-0.85mm}+\hspace{-0.85mm}\Zbf_j),\forall j\hspace{-0.75mm}\in\hspace{-0.75mm}\{1,\dots,s\}$
        \ENDFOR
        \STATE $\gbf_t = \frac{1}{s}\sum_{j=1}^{s}\nabla_{\thetabf} \Lcal_{S_\ibf}({\thetabf^{(t)}}-~\Zbf_j)$~\\\COMMENT{where $f_{\thetabf^{(t)}-\Zbf_j}(\Xbf_{\ibf}) = \Xbf^{(l,j)}$}
        \STATE $\thetabf^{(t+1)}\leftarrow \thetabf^{(t)}-\eta \gbf_t$
        \ENDFOR
        \STATE \textbf{return} $\thetabf^{(T)}$
    \end{algorithmic}
\end{algorithm}

We emphasize that the result from \Cref{theo:efficient_nsgd} and the application depicted in \Cref{alg:ENSGD} can easily be generalized to other usages. For instance, the noise injection method can be used on any linear layer of any type of neural network. Also, we considered isotropic Gaussian distributions for simplicity, yet it is easy to generalize \Cref{theo:efficient_nsgd} to multivariate Gaussian noise distributions with a diagonal covariance matrix.

\section{Experiments}

\paragraph{Assessing the relevance of the ENSGD algorithm.}

We first empirically verify whether our Efficient Noisy Stochastic Gradient Descent algorithm (ENSGD, \Cref{alg:ENSGD}) yields better performances than the vanilla approach (NSGD, \Cref{alg:NSGD}). To do so, we train a simple convolutional neural network (two convolutional layers and three linear layers of width 120) on the CIFAR100 \cite{Krizhevsky09learningmultiple} task. As discussed in \Cref{sec:4}, we consider an isotropic Gaussian multivariate distribution for the noise distribution, with various values for the $\sigma$ parameter, log-uniformly spaced between 0.00125 and 0.08. We report the test 0-1 loss obtained over 5 random initializations, using the Adam base algorithm, for a maximum of 128 epochs, with a learning rate of $5\cdot10^{-4}$.

\begin{table*}
    \centering
    \begin{tabular}{|c||c|c|c|c|c|c|c|}
        \hline
        \textbf{Noise injection method $\backslash$ $\sigma$ value} & \textbf{0.00125} & \textbf{0.0025} & \textbf{0.005} & \textbf{0.01} & \textbf{0.02} & \textbf{0.04} & \textbf{0.08} \\
        \hline
        \hline
        \textbf{NSGD (\Cref{alg:NSGD})} & 0.6909 & 0.6834 & 0.6698 & 0.6575 & 0.6540 & \textbf{0.6508} & 0.7208 \\
        \hline
        \textbf{ENSGD (\Cref{alg:ENSGD})} & 0.6909 & 0.6796 & 0.6623 & 0.6356 & \textbf{0.6241} & 0.6258 & 0.6759 \\
        \hline
    \end{tabular}
    \caption{Best mean test 0-1 loss over 5 random seeds, with varying noise values, for both the NSGD and the ENSGD algorithms. \\ \textbf{Bold}: overall best test error per algorithm.}
    \label{tab:first}
\end{table*}


\begin{table*}
    \centering
    \begin{tabular}{|c||c|c|c|c|c|c|c|c|c|c|}
        \hline
        \textbf{Noise injection method $\backslash$ Num. of noise samples $s$} & \textbf{1} & \textbf{2} & \textbf{4} & \textbf{8} & \textbf{32} & \textbf{64} & \textbf{128}\\
        \hline
        \hline
        \textbf{ENSGD (\Cref{alg:ENSGD})} & \textbf{0.6185} & 0.6198 & 0.6222 & 0.6233 & 0.6276 & 0.6252 & 0.6261\\
        \hline
    \end{tabular}
    \caption{Best mean test 0-1 loss over 5 random seeds, with varying noise values, for the ENSGD algorithm. \textbf{Bold}: overall best test error.}
    \label{tab:second}
\end{table*}

\begin{table}
    \centering
    \small
    \begin{tabular}{lccc}
        \toprule
        \textbf{Noise distribution} & \textbf{CIFAR-100} & \textbf{AG News} & \textbf{IMDB} \\
        \midrule
        None (baseline) & 0.6985 & 0.0843 & 0.1110 \\
        \midrule
        Isotropic  & \underline{0.6241} & \underline{0.0837} & 0.1104 \\
        Moving     & 0.6256 & \textbf{0.0832}   & \textbf{0.1099} \\
        SqGrad     & \textbf{0.6213} & 0.0837            & 0.1122           \\
        InvSqGrad  & 0.6456 & 0.0851            & \underline{0.1103}           \\
        \bottomrule
    \end{tabular}
    \caption{Best mean test 0-1 loss over 5 random seeds, with varying noise values, for various noise distributions. \textbf{Bold}: overall best test error per method; \underline{underline}: runner-up.}
    \label{tab:third}
\end{table}

The results, reported in \cref{tab:first}, indicate that our ENSGD is relevant when compared to the vanilla noise injection approach. Indeed, it leads to better test error for every $\sigma$ value tested. 

\paragraph{Questioning the importance of the number of noise samples.}

It is natural to assume that the number of noise samples, the parameter $s$ in \Cref{alg:ENSGD}, must be big to obtain an adequate empirical estimation of $\Lcal_S^Z(\thetabf) = \Esp_{\Zbf\sim Z}\left[\Lcal_S(\thetabf-\Zbf)\right]$ during the training. To verify this, we consider the same empirical setting as earlier, but now varying the number of perturbation draws per weight update step. We treat the $\sigma$ values as a hyperparameter to be optimized with the help of a validation set. The results of this experiment are contained in \Cref{tab:second}.

We observe that increasing $s$ beyond a single draw (1) provides diminishing returns in terms of final test loss while significantly increasing the computational cost per step. This empirical evidence further reinforces our central claim: lightweight, simple noise injection strategies (using a single perturbation draw per example) capture the vast majority of optimization and generalization benefits.

\paragraph{Noise distribution comparison.}


Leveraging our ENSGD approach, we now consider various parameterizations of the variance matrix $\Sigma$ for the multivariate Gaussian distributions that generate the noise injected into the model, assuming a diagonal covariance matrix defined by the vector $\sigmabf_t$, which may vary with the current epoch number $t$. The \textit{Isotropic} parameterization method, described in \cref{sec:4} and used in most works presented in the introduction, is defined to generate noise with a constant variance over time and over every model parameter: $\sigma^*_{t,i} = \sigma$; the \textit{Moving} method traces an exponential moving average of the empirical variance of the last 5 gradients values, which we denote $\mathbf{g}^*_t$: $\sigma^*_{t,i} = \sigma\cdot g^*_{t,i}$; the \textit{SqGrad} method scales the standard deviation of the noise by Adam's second moment and then normalizes it: $\sigma^*_{t,i} = \sigma\frac{\frac{1}{t}\sum_{t'=0}^t g_{t',i}^2}{\frac{1}{t}\sum_{t'=0}^t\sum_{i} g_{t',i}^2}$; the \textit{InvSqGrad} method scales the standard deviation of the noise by the inverse of Adam's second moment and then normalizes it: $\sigma^*_{t,i} = \sigma\frac{\frac{1}{t}\sum_{t'=0}^t\sum_{i} g_{t',i}^2}{\frac{1}{t}\sum_{t'=0}^t g_{t',i}^2}$. We also experiment with no noise at all to confirm the relevance of the noise-injection paradigm.

To compare all of these noise distribution parameterizations, we perform the same experiment as earlier, but using the ENSGD algorithm and the various parameterization methods. We experiment on CIFAR-100 and on two standard natural language processing benchmarks: AG News~\cite{zhang2015character}, a four-class news topic classification task with 120{,}000 training articles, and IMDB~\cite{DBLP:conf/acl/MaasDPHNP11}, a binary sentiment classification task containing 50{,}000 movie reviews. In both settings, documents are represented as TF-IDF vectors over a vocabulary of 10{,}000 features and classification is performed by a fully-connected MLP (multi-layer perceptron) trained with the Adam underlying algorithm at a learning rate of $5\times10^{-4}$. We still treat the $\sigma$ values as a hyperparameter to be optimized with the help of a validation set.

The results, compiled in \cref{tab:third}, show that the isotropic method, while not yielding the best mean test error, is only slightly worse than the best method. That is, the gains from using more sophisticated methods (with higher computational overhead) are marginal (from 0.6241 to 0.6213). This experiment also confirms that the noise injection paradigm yields impressive gains in the image classification setting, thus confirming that the noise injection paradigm helps obtain better results on trained neural networks.  The absolute gains over the no-noise baseline are modest on both text classification datasets, at most $0.11$ percentage points, which is substantially smaller than what was observed on CIFAR100. This is expected because TF-IDF representations aggregate token occurrences into a fixed-length frequency vector, which produces an optimization landscape that is already comparatively smooth, leaving less room for noise injection to improve convergence. Despite this, the results remain consistent with our central finding: isotropic noise matches or closely approximates the best-performing method on both tasks, while more sophisticated parameterizations provide no systematic advantage; \textit{SqGrad}, in particular, fails to improve over the baseline on IMDB at any tested noise level. Crucially, no method causes significant degradation at its optimal noise level. Taken together, these results support the conclusion that simplicity suffices for parameter noise injection across diverse task settings, including regimes where the overall benefit of noise is limited by the smoothness of the underlying optimization landscape.



\section{Conclusion}

We investigated parameter noise injection for stochastic gradient descent, introducing ENSGD, an efficient method that assigns each training example its own perturbation without sacrificing batched computation. Through systematic experiments on image and text classification benchmarks, we showed that isotropic noise with a single perturbation drawn per update step consistently matches or closely approximates more sophisticated parameterizations. The computational overhead of diagonal Gaussian schemes and multi-sample gradient averaging yields only marginal gains, if any. These findings suggest that simplicity suffices for parameter noise injection: practitioners can confidently adopt lightweight noise strategies to improve optimization and generalization without resorting to elaborate perturbation designs.

In the near future, we plan on broadening the experimental section with more datasets in order to consolidate our findings. We also plan on extending the noise injection method from linear layers to convolutional layers; that is, linear layers involving weight sharing.

\newpage
\bibliographystyle{named}
\bibliography{ijcai26}

\end{document}